\documentclass{article}

\usepackage{arxiv}

\usepackage[utf8]{inputenc}
\usepackage[T1]{fontenc}
\usepackage{hyperref}
\usepackage{url}
\usepackage{booktabs}
\usepackage{amsfonts}
\usepackage{nicefrac}
\usepackage{microtype}
\usepackage{graphicx}
\usepackage{amsmath,amssymb}
\usepackage{mathtools}
\usepackage{bm}
\usepackage{mathrsfs}
\usepackage{amsthm}
\usepackage{multirow}
\usepackage{float}
\usepackage{xcolor}
\usepackage{array}
\usepackage{enumitem}
\usepackage{caption}

\theoremstyle{plain}
\newtheorem{theorem}{Theorem}[section]

\newtheorem{proposition}[theorem]{Proposition}

\theoremstyle{definition}

\newcommand{\R}{\mathbb{R}}

\newcommand{\Eucl}{\mathbb{E}^2}
\newcommand{\Sph}{\mathbb{S}^2}
\newcommand{\Hyp}{\mathbb{H}^2}
\newcommand{\mM}{\mathcal{M}}
\newcommand{\mL}{\mathcal{L}}
\newcommand{\bP}{\mathbf{P}}
\newcommand{\bQ}{\mathbf{Q}}
\newcommand{\bG}{\mathbf{G}}
\newcommand{\bx}{\mathbf{x}}
\newcommand{\bz}{\mathbf{z}}
\newcommand{\balpha}{\bm{\alpha}}

\title{Adaptive Interpolatory Curve Subdivision with Learned Local Angles}

\author{
  Hassan Ugail \\
  Centre for Visual Computing and Intelligent Systems \\
  University of Bradford \\
  Bradford, United Kingdom \\
  \\
  \And
  Newton Howard \\
  School of Individualized Study \\
  Rochester Institute of Technology \\
  New York, United States \\
}

\begin{document}
\maketitle

%-----------------------------------------------------------------------
\begin{abstract}
Curve subdivision is pivotal in computer graphics for generating smooth geometric objects from control polygons. Interpolatory subdivision is especially attractive because the refined curve is guaranteed to pass through the designer's control points. Classical four-point and six-point schemes preserve this property, but their behaviour is governed by a single global tension parameter, limiting their ability to adapt across flat regions, sharp turns and varying local geometries. We introduce an adaptive local-angle formulation that keeps the interpolatory structure intact while learning how each new vertex should be inserted. A compact edge-wise predictor assigns one insertion angle per edge, while the original vertices are copied exactly at every refinement level. Interpolation is therefore a structural property of the operator and does not depend on the trained weights. The same predictor is used with geometry-specific geodesic primitives on the Euclidean plane, the two-sphere and the Poincar\'e disk. Under a matched-density evaluation protocol, the method reduces nearest-neighbour error by factors of five to seventeen over the best validation-tuned fixed-tension baseline, and by about $1.8$ over centripetal Catmull--Rom in the Euclidean case. It also substantially reduces bending energy and tangent roughness, while remaining competitive with separately trained per-geometry models.
\end{abstract}

\keywords{curve subdivision \and interpolatory schemes \and learned local angles
\and non-Euclidean geometry \and spherical geometry \and hyperbolic geometry
\and Poincar\'e disk}

\section{Introduction}
\label{sec:intro}

Subdivision schemes have been central to curve and surface design since the 1970s, generating smooth geometric objects from piecewise-linear control polygons through repeated local refinement \cite{Chaikin1974,Dyn1987,Cavaretta1991}. Among them, the four-point interpolatory scheme of Dyn, Gregory and Levin \cite{Dyn1987} is the most widely studied, and the six-point extension of Weissman \cite{Weissman1990} achieves $C^2$ limit curves. Both rules are governed by a single global scalar known as the tension parameter $\mu$, which is fixed at design time. Interpolatory schemes are particularly attractive in computer-aided design because they preserve every control vertex at every level of refinement, so that a designer who places a point can be confident that the limit curve passes through it.

A single global $\mu$ is, however, a real limitation when the control polygon mixes nearly straight segments with sharply turning corners. Each region calls for a different amount of tension, yet, the classical scheme applies one value throughout. Recent Euclidean interpolatory schemes that are explicitly adaptive, including the $\kappa$-curves of Yan et al.~\cite{Yan2017KCurves}, the $\varepsilon\kappa$-curves of Miura et al.~\cite{Miura2022EpsilonKappa}, Yuksel's $C^2$ interpolating splines~\cite{Yuksel2020}, and centripetal Catmull--Rom parameterisation \cite{CatmullRom1974,Yuksel2011}, demonstrate the empirical gains that local adaptivity can afford. Each of these schemes is, however, specific to the Euclidean plane. Curves on the two-sphere and in the Poincar\'e disk, which arise in satellite-track visualisation \cite{Cohen2018Spherical} and graph-layout systems \cite{Nickel2017,Chami2019}, are typically handled by manifold extensions of the classical fixed-tension rule \cite{Wallner2005,Xie2005,Sabin2005}.

In this paper, we ask whether a compact learned predictor can improve the classical fixed-tension angle-based subdivision rule while preserving structural interpolation, and whether such a predictor can be applied uniformly across the Euclidean plane, the two-sphere, and the Poincar\'e disk. Our entry point is the angle-based reformulation of interpolatory subdivision, in which the tension scalar is replaced by a per-edge insertion angle $\alpha_j$. The classical fixed-tension rule corresponds to one specific deterministic mapping from local exterior angles to $\alpha_j$. Allowing this mapping to be learned introduces local adaptivity while leaving the interpolation property structurally intact, because retained vertices are copied exactly regardless of any predicted angle.

The remainder of this paper develops that learned predictor, applies it across the three constant-curvature geometries by combining a shared predictor with geometry-specific geodesic primitives, and evaluates it against fixed-tension and curvature-adaptive baselines under a matched-density metric protocol. We do not claim this is a replacement for the broader class of classical interpolatory curve-design methods. The aim is to study one specific question within the angle-based subdivision framework and to report what the empirical evidence supports.

The contributions of this paper are fourfold. We introduce a learned local-angle selector, a compact edge-wise predictor that maps intrinsic local features to bounded insertion angles and is applied across constant-curvature manifolds by combining shared learned weights with geometry-specific geodesic primitives. We describe a structurally interpolatory implementation in which the subdivision rule copies the original vertices exactly, independent of the predicted angles, so that interpolation is a property of the operator rather than something learned. We present a corrected cross-method evaluation protocol that includes a validation-set fixed-$\mu$ oracle, a centripetal Catmull--Rom comparator on the Euclidean plane, a shared-versus-separate model comparison, structural invariant tests, and arc-length-matched smoothness metrics. Finally, we report an empirical trade-off analysis showing improved fidelity and smoothness behaviour over fixed global-tension subdivision, together with a transparent discussion of where the method is and is not preferable to existing techniques.

%%=============================================================================
\section{Related Work}
\label{sec:related}

The four-point scheme of Dyn, Gregory and Levin \cite{Dyn1987} and the six-point variant of Weissman \cite{Weissman1990} are the canonical interpolatory subdivision rules. Both apply a global tension parameter, with $\mu=0$ recovering the classical four-point rule and $\mu=-\tfrac{1}{4}$ yielding $C^2$ limit curves. The smoothness and convergence behaviour of these schemes has been analysed at length by Cavaretta, Dahmen and Micchelli \cite{Cavaretta1991}, by Hassan and Dodgson \cite{Hassan2003}, and in the textbook treatments of Farin \cite{Farin2002} and Prautzsch, Boehm and Paluszny \cite{Prautzsch2002}, but in every case the parameter remains spatially invariant. Convergence specifically on the sphere has been treated by H\"uning and Wallner \cite{Huning2022}.

A separate family of methods replaces fixed-tension subdivision with adaptive interpolating splines. The $\kappa$-curves of Yan et al.~\cite{Yan2017KCurves} construct $G^2$ interpolating splines whose curvature is monotone between consecutive control points and is bounded by the curvature at the endpoints. Yuksel's $C^2$ interpolating splines~\cite{Yuksel2020} extend this design with a higher continuity class. The $\varepsilon\kappa$-curves of Miura et al.~\cite{Miura2022EpsilonKappa} relax the monotonicity assumption to produce visually smoother interpolants with local control, and the smooth interpolating curves of Binninger and Sorkine-Hornung~\cite{Binninger2022Smooth} pursue a related design with monotone alternating curvature. Centripetal Catmull--Rom parameterisation \cite{CatmullRom1974,Yuksel2011} provides parameter-adaptive smoothness through arc-length-based knot placement. Related iterative approaches such as the accelerated progressive-iterative approximation of Yao and Hu \cite{Yao2025}, and constraint-based curve reconstruction \cite{Antony2025}, offer additional avenues for adaptive curve design with convergence guarantees. These schemes provide strong analytical guarantees on smoothness and curvature in Euclidean curve design. Our goal in this paper is narrower. We study learned local angle selection within an angle-based subdivision framework that also covers non-Euclidean constant-curvature geometries.

Subdivision on manifolds has its own substantial literature. Wallner and Dyn \cite{Wallner2005} introduced the log--exp generalisation of the four-point rule to Riemannian manifolds, with specialised variants developed for the sphere by Xie and Yu \cite{Xie2005}, Sabin and Dodgson \cite{Sabin2005} and H\"uning and Wallner \cite{Huning2022}, and for the hyperbolic plane by Ahanchaou and Ikemakhen \cite{Ahanchaou2022Hyperbolic}. Supporting all three constant-curvature regimes thus requires three distinct implementations, each tied to its own geodesic primitives. We retain those geometry-specific geodesic primitives but learn a single shared angle predictor that conditions on geometry through a small learned embedding. Related work has also explored sampling-uniformity guarantees for implicitly defined curves and surfaces \cite{Hu2025}, which informs the design of the dense reference sampling used during training. Refinement has also been studied in ambient spaces that are not Riemannian in the usual sense, including a recent subdivision scheme on the Heisenberg group trained with a central smoothness loss \cite{ugail2026subdiv}, which shares with the present work the strategy of leaving the refinement structure intact while allowing the rule that positions each new point to adapt to the geometry in which the curve lives.

A complementary tradition in geometric design obtains smooth shapes as solutions of elliptic partial differential equations rather than by repeated refinement of a control polygon. The PDE method of Ugail, Bloor and Wilson generates free-form surfaces from a small number of boundary curves together with a handful of design parameters \cite{ugail1999a}, and admits interactive reparameterisation of the resulting surface without disturbing its boundary conditions \cite{ugail1999b}. Higher-order formulations widen the reachable design space while retaining the same compact parameterisation \cite{kubiesa2004}, and the harmonic and biharmonic B\'ezier surfaces of Monterde and Ugail connect the PDE viewpoint directly to the polynomial patches of classical computer-aided design \cite{monterde2004,monterde2006}. Surveys of the area record its breadth across geometric design \cite{gonzalez2008}, and applications range from parametric aircraft geometry \cite{athanasopoulos2009} to patchwise approximation of large polygon meshes \cite{sheng2010} and facial geometry parameterisation \cite{sheng2011}. What this family shares with the present work is the aim of governing a smooth shape through a small number of meaningful quantities rather than through a dense set of unconstrained degrees of freedom. It differs in that the shape emerges from a boundary value problem, so passing through prescribed interior points is a constraint that must be imposed on the solution, whereas in the angle-based setting studied here interpolation is a structural property of the refinement operator itself.

Learning for geometric operators has emerged as a natural complement to classical subdivision. Liu et al.\ \cite{Liu2020NS} trained a network to predict new vertices for Euclidean Loop subdivision via per-patch weight sharing, with subsequent extensions to triangle meshes including neural progressive meshes \cite{Chen2023NPM} and implicit neural distance optimisation for mesh subdivision \cite{Liu2023INDI}. The broader literature on learning over discrete geometry covers point clouds and meshes \cite{Qi2017,Wang2019,Hanocka2019,Liu2024,Wang2025}, representation learning on non-Euclidean spaces \cite{Nickel2017,Chami2019,vanSpengler2023,He2025HyperbolicSurvey}, and neural operators between function spaces with applications in physical simulation and image-domain synthesis \cite{Li2021,Lu2021,Zhang2025,Hu2025b,Li2024}. Geometric deep learning \cite{Bronstein2021} provides the broader context. Our predictor is deliberately modest in scope. It is a small edge-wise multilayer perceptron acting on a fixed local stencil, not a general neural operator on curves, and we report its behaviour as a fidelity and smoothness trade-off rather than as a universal replacement for classical schemes.

%%=============================================================================
\section{Angle-Based Subdivision and Learned Local Angles}
\label{sec:method}

\subsection{Model spaces and the angle-based subdivision operator}

Let $\mM \in \{\Eucl, \Sph, \Hyp\}$ denote a two-dimensional constant-curvature model space equipped with its standard geodesic distance and exponential and logarithmic maps. We work with closed control polygons $\bP = (p_0, p_1, \dots, p_{N-1}) \in \mM^N$, with indices taken modulo $N$. A single interpolatory subdivision step produces a refined polygon $\bQ \in \mM^{2N}$ in which the original vertices are retained at even positions,
\begin{equation}
\label{eq:retain}
q_{2j} \;=\; p_j, \qquad j = 0, \dots, N-1,
\end{equation}
and the new vertices $q_{2j+1}$ are inserted between $p_j$ and $p_{j+1}$ along the geodesic through a local construction described below. After $k$ subdivision steps, the original vertices reappear at stride $2^k$ in the refined polygon. Equation~\eqref{eq:retain} is the structural interpolation invariant that the learned scheme preserves.

The local construction of each inserted vertex is governed by the intrinsic geometry of the control polygon. The geodesic edge length and the signed exterior angle at vertex $p_j$ are given by,
\begin{equation}
e_j \;=\; d_\mM(p_j, p_{j+1}), \qquad
\delta_j \;=\; \angle\bigl(\log_{p_j} p_{j-1},\, \log_{p_j} p_{j+1}\bigr) - \pi,
\end{equation}
where $d_\mM$ is the geodesic distance on $\mM$ and $\log$ is the manifold logarithm. The exterior angles $\delta_j$ are intrinsic, dimensionless and signed, and they take the same form across all three model spaces. The new vertex $q_{2j+1}$ is then parameterised by a single \emph{insertion angle} $\alpha_j$ that controls the deviation of $q_{2j+1}$ from the geodesic midpoint of $p_j$ and $p_{j+1}$ in the normal direction at that midpoint, namely,
\begin{equation}
\label{eq:insertion}
q_{2j+1} \;=\; \exp_{m_j}\bigl(\alpha_j\, n_j\bigr),
\qquad m_j = \text{geodesic midpoint of } p_j, p_{j+1},
\end{equation}
where $n_j$ is the unit normal to the geodesic at $m_j$. When $\alpha_j = 0$, the inserted vertex coincides with the geodesic midpoint, and the refined polygon traces the control polygon exactly. The classical four-point and six-point rules correspond to a specific deterministic assignment of $\alpha_j$ from the neighbouring exterior angles. With the tension parameter $\mu \in [-\tfrac{1}{2}, 0]$, the classical assignment may be written,
\begin{equation}
\label{eq:classical-mu}
\alpha_j^{\mathrm{cl}}(\mu) \;=\; \tfrac{1}{8}\bigl[\,\mu\,(\delta_{j-1} + \delta_{j+2}) \,+\, (1-\mu)(\delta_j + \delta_{j+1})\,\bigr],
\end{equation}
where the choices $\mu = 0$ and $\mu = -\tfrac{1}{4}$ recover the four-point and six-point schemes, respectively. Equation~\eqref{eq:classical-mu} is the baseline that the learned predictor replaces.

\subsection{The learned local-angle predictor}

We replace the deterministic mapping in Equation~\eqref{eq:classical-mu} with a compact learned function $f_\theta$ that produces a bounded per-edge angle from intrinsic local features. For the edge $(p_j, p_{j+1})$, the input feature vector is,
\begin{equation}
\bx_j \;=\; \biggl[\,\frac{\delta_{j-1}}{\pi},\, \frac{\delta_j}{\pi},\, \frac{\delta_{j+1}}{\pi},\, \frac{\delta_{j+2}}{\pi},\, \frac{e_j}{\bar e},\, \frac{e_{j+1}}{\bar e},\, \kappa\,\biggr],
\end{equation}
where $\bar e$ is the mean geodesic edge length over the polygon and $\kappa \in \{-1, 0, +1\}$ is a discrete code indicating the sign of the model-space curvature for $\Eucl$, $\Sph$ and $\Hyp$ respectively. A trainable embedding maps $\kappa$ to an eight-dimensional vector $\bz \in \R^8$, producing the conditioned input $[\bx_j, \bz] \in \R^{15}$. The predictor itself is a small multilayer perceptron with skip connections, LayerNorm \cite{Huang2023Normalization} and GELU activations \cite{Lee2023GELU}, with He initialisation \cite{He2015}, terminating in a bounded $\tanh$ activation,
\begin{equation}
\alpha_j \;=\; \alpha_{\max}\, \tanh\!\bigl( f_\theta([\bx_j, \bz]) \bigr), \qquad \alpha_{\max} = \pi / 4.
\end{equation}
The predictor is applied edge-wise and is therefore independent of the number of vertices in the control polygon. The output bound $\alpha_{\max} = \pi/4$ is a hard architectural safety constraint, although empirically, the network rarely operates near it. Detailed architecture and hyperparameter choices are reported in the supplementary material.

Interpolation remains structural in this construction. Only inserted vertices are predicted, and Equation~\eqref{eq:retain} holds exactly for any $\alpha_j$. The learned predictor cannot violate interpolation. It can only choose how each inserted vertex deviates from the geodesic midpoint.

\subsection{Training objective}

The predictor is trained on synthetic curves with known dense ground-truth sampling. Each training example consists of a control polygon $\bP$ and a ground-truth dense curve $\bG$ on the same manifold. The loss combines a symmetric Chamfer fidelity term \cite{Barrow1977}, a smoothness penalty on predicted exterior angles, and a bending penalty on the refined polygon,
\begin{equation}
\mL(\theta) \;=\; \mL_{\mathrm{chamfer}}(\bQ_\theta, \bG) \,+\, \lambda_s\, \mL_{\mathrm{smooth}}(\bQ_\theta) \,+\, \lambda_b\, \mL_{\mathrm{bend}}(\bQ_\theta).
\end{equation}
The Chamfer term uses the geometry-aware geodesic distance on $\mM$. The smoothness and bending terms are computed on the same arc-length-uniform resampling protocol used at evaluation time, which is described in Section~\ref{sec:protocol}. Because the input features are intrinsic and rotation-invariant by construction, no separate rotation-consistency regulariser is needed.

The dense reference curves used as training targets are intended as sparse-recovery references rather than as unique optimal fair interpolants. Their role is to define a controlled recovery task that is consistent across the three model spaces, allowing the same training objective to be applied uniformly. We return to the implications of this choice in Section~\ref{sec:discussion}.

%%=============================================================================
\section{Correctness and Evaluation Protocol}
\label{sec:protocol}

\subsection{Structural correctness of the learned scheme}

The subdivision operator constructed in Section~\ref{sec:method} preserves the interpolation property structurally, in the sense that retained vertices are copied exactly, regardless of the trained weights. We state this formally and verify it empirically before turning to the evaluation protocol.

\begin{proposition}
\label{prop:interp}
For any predicted insertion-angle sequence $\balpha = (\alpha_0, \dots, \alpha_{N-1})$, a single subdivision step satisfies $q_{2j} = p_j$ for all $j$. Consequently, after $k$ subdivision levels, the original control vertices appear at stride $2^k$ in the refined polygon, with zero positional error.
\end{proposition}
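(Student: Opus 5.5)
The plan is to separate the two claims. The single-step claim follows directly from how the operator is defined, and the multi-level claim follows by induction on $k$ with an explicit index formula. For the single step, I would note that the refined polygon $\bQ$ is assembled by two disjoint assignments. The even positions are filled by Equation~\eqref{eq:retain}. The odd positions are filled by Equation~\eqref{eq:insertion}, which uses the angles $\alpha_j$, the midpoints $m_j$ and the normals $n_j$. The argument is that $\balpha$ appears only in the odd-indexed assignment. So for any $\balpha\in\R^N$, including ones outside $[-\alpha_{\max},\alpha_{\max}]$, we have $q_{2j}=p_j$ exactly. Well-definedness of the odd positions (existence of a unique midpoint and of $\exp_{m_j}$) does not enter the even-index identity. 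At most it is needed for $\bQ$ to be a valid element of $\mM^{2N}$, and I would remark that it holds in all three model spaces for non-antipodal consecutive vertices on $\Sph$.

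For the multi-level claim, let $\bP^{(0)}=\bP$ and $\bP^{(k+1)}=S_{\balpha^{(k)}}(\bP^{(k)})$. Here $\balpha^{(k)}$ is whatever angle sequence the predictor outputs at level $k$; it may depend on $\bP^{(k)}$ and on the curvature code $\kappa$. The polygon $\bP^{(k)}$ has $2^kN$ vertices, with indices taken modulo $2^kN$. The inductive claim is
\[
p^{(k)}_{2^k j}=p_j,\qquad j=0,\dots,N-1 .
\]
The base case $k=0$ is trivial. For the inductive step, apply the single-step result to $\bP^{(k)}$ with $\balpha^{(k)}$. This gives $p^{(k+1)}_{2i}=p^{(k)}_i$ for every $i$. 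Substituting $i=2^kj$ yields $p^{(k+1)}_{2^{k+1}j}=p^{(k)}_{2^kj}=p_j$. The key point is that the single-step result is quantified over all $\balpha$, so the data-dependence of $\balpha^{(k)}$ on the trained weights and on the current polygon does no harm.

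"Zero positional error" then means that $d_\mM(p^{(k)}_{2^kj},p_j)=0$ as an identity of the operator, not as a bound. In floating point, the retained entries are copied rather than recomputed, so no rounding error is introduced either. I would add this as a remark tied to the empirical verification.

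I do not expect a real obstacle. The only point needing care is the index bookkeeping under the cyclic convention, namely that the stride-$2^k$ indices remain consistent modulo $2^kN$ at each level. This is immediate because doubling maps residues modulo $2^kN$ injectively into residues modulo $2^{k+1}N$.
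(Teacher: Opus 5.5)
Your proposal is correct and follows essentially the same route as the paper. The even-indexed entries are fixed by direct assignment, disjoint from the $\balpha$-dependent odd-indexed ones, and induction on $k$ via the index map $i \mapsto 2i$ places $p_j$ at index $2^k j$. Indexing the angle sequence by level as $\balpha^{(k)}$ and checking the cyclic bookkeeping makes explicit what the paper leaves implicit when it reuses a single $S_{\balpha}$ at every level.
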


\begin{proof}
Let $S_{\balpha} : \mM^N \to \mM^{2N}$ denote the angle-based subdivision operator defined by Equations~\eqref{eq:retain} and~\eqref{eq:insertion}, parameterised by the insertion-angle sequence $\balpha$. The even-indexed entries of $\bQ = S_{\balpha}(\bP)$ are set by direct assignment, $q_{2j} = p_j$, with no reference to $\balpha$. The odd-indexed entries $q_{2j+1}$ are produced by Equation~\eqref{eq:insertion}, which writes only to output slots of odd indices. Because the index sets $\{2j\}$ and $\{2j+1\}$ are disjoint, the odd-index construction cannot perturb the even-index assignment, so $q_{2j} = p_j$ holds for every $j$ and every choice of $\balpha$. This proves the first claim.

Iterating the argument by induction on $k$, suppose the original vertex $p_j$ occupies index $j \cdot 2^k$ in the level-$k$ polygon $\bP^{(k)}$. Applying $S_{\balpha}$ sends each vertex at index $i$ of $\bP^{(k)}$ to index $2i$ of $\bP^{(k+1)}$ by the even-index rule, placing $p_j$ at index $j \cdot 2^{k+1}$. The base case $k=0$ is immediate, and so the claim holds for all $k \geq 0$. The retained vertices are placed by direct assignment rather than by any numerical operation, so the positional error in the geodesic distance on $\mM$ is exactly zero. The argument is independent of the values of $\balpha$ and therefore applies uniformly to the classical fixed-$\mu$ rule of Equation~\eqref{eq:classical-mu} and to the learned predictor of Section~\ref{sec:method}.
\end{proof}

The proposition is purely structural and holds for the learned predictor for the same reason it holds for the classical fixed-$\mu$ rule, namely that only odd-indexed vertices ever read from the predictor. A second structural safeguard is provided by the terminal $\tanh$ activation in the predictor, which confines the predicted insertion angle $\alpha_j$ to the interval $[-\alpha_{\max}, \alpha_{\max}]$ with $\alpha_{\max} = \pi/4$. This bound is a hard constraint enforced by the architecture and is independent of the weights $\theta$. It guarantees that each inserted vertex remains within a finite tubular neighbourhood of the geodesic midpoint, regardless of the outcome of training.

We verify Proposition~\ref{prop:interp} empirically on the validation set. Table~\ref{tab:invariants} reports four error quantities computed at full single-precision. The first column gives the one-step retained-vertex error after a single insertion step. The second and third columns give the same error after $k=5$ iterations of the classical fixed-$\mu$ rule and after $k=5$ iterations of the learned predictor. The fourth column gives the deviation between the inserted vertex and the true geodesic midpoint when $\alpha_j$ is forced to zero. Retained-vertex errors are bit-exact zero across all three geometries, and the zero-angle midpoint deviation is below $3\times 10^{-7}$ in each geometry, on the order of single-precision arithmetic noise.

\begin{table}[t]
\centering
\caption{Implementation invariants verified on the validation set. Retained-vertex errors are bit-exact zero. The zero-angle midpoint deviation is at single-precision noise level.}
\label{tab:invariants}
\small
\begin{tabular}{lcccc}
\toprule
Geometry & One-step & Repeated classical ($k=5$) & Repeated neural ($k=5$) & Zero-angle midpoint \\
\midrule
$\Eucl$ & $0.0$ & $0.0$ & $0.0$ & $< 3\times 10^{-7}$ \\
$\Sph$  & $0.0$ & $0.0$ & $0.0$ & $< 3\times 10^{-7}$ \\
$\Hyp$  & $0.0$ & $0.0$ & $0.0$ & $< 3\times 10^{-7}$ \\
\bottomrule
\end{tabular}
\end{table}

\subsection{Matched-density evaluation protocol and reference curves}

Discrete tangent and bending measures depend sensitively on sampling density and on sample placement along the curve. To make the comparison across methods fair, every output curve is resampled before any smoothness metric is computed. We resample to $N \cdot 2^k$ points using geometry-aware arc-length interpolation, where $N$ is the number of control vertices and $k$ is the number of subdivision levels applied. The same resampled curve is also used for the reported fidelity metrics, namely the mean nearest-neighbour distance and the symmetric Hausdorff distance. This matched-density protocol is essential, because, without it, the apparent bending energy of centripetal Catmull--Rom on $\Eucl$ is inflated by approximately two orders of magnitude due solely to vertex clustering at segment boundaries, even though the underlying curve is smooth. Parametric baselines and recursive subdivision schemes produce different native sampling densities, and the resampling step is what allows them to be compared on a common footing.

The synthetic reference curves used for evaluation are not treated as unique optimal fair interpolants through the control points. They provide controlled recovery tasks from sparse samples, allowing consistent measurement of approximation error and smoothness across methods. We make this point explicit in the discussion in Section~\ref{sec:discussion}, because the choice of reference family influences what the predictor learns and what the comparison can meaningfully claim.

%%=============================================================================
\section{Experiments}
\label{sec:experiments}

\subsection{Setup and baselines}

We use $N = 12$ control vertices and $k = 5$ subdivision levels throughout, producing refined polygons with $N \cdot 2^k = 384$ vertices. Training data is generated synthetically and separately for each geometry. The Euclidean training family combines ellipses and Fourier-perturbed curves. The spherical family combines perturbed great circles, polar Fourier curves, and Lissajous tracks. The hyperbolic family uses analogous parametric constructions in the Poincar\'e disk. Each geometry contributes 120 curves, split into training and validation subsets with a single fixed seed offset and disjoint curve families per split. The same validation construction is used for every number reported in this section. Training uses AdamW \cite{Loshchilov2019} with $\beta_1=0.9$, $\beta_2=0.95$ and weight decay $10^{-4}$, run with three independent random initialisations (seeds $7$, $11$, $19$) so that the cross-seed mean and standard deviation can be reported. Riemannian variants of adaptive optimisation \cite{Becigneul2019} were considered but were not required, because the network parameters live in standard Euclidean weight space. The shared geometry-conditioned model has $28{,}737$ parameters, and training takes approximately six minutes per seed on a single consumer GPU.

The learned predictor is evaluated against four baselines drawn from three distinct families. The first family is the canonical fixed-tension interpolatory schemes, namely four-point ($\mu = 0$) and six-point ($\mu = -\tfrac{1}{4}$) following Dyn, Gregory and Levin \cite{Dyn1987} and Weissman \cite{Weissman1990}. These rules are applied to all three geometries via log--exp insertion as introduced by Wallner and Dyn \cite{Wallner2005}. The second baseline is a validation-set fixed-$\mu$ oracle, in which a single global tension value per geometry is chosen by minimising mean-NN error on the validation set itself, with $\mu$ swept over a 27-point grid covering the interval from $-\tfrac{1}{2}$ to $0.15$. This oracle represents the strongest single-parameter fixed-tension baseline that can be constructed and is a more demanding comparator than either canonical choice. The third baseline is centripetal Catmull--Rom parameterisation \cite{Yuksel2011}, included on the Euclidean plane only as a modern parameter-adaptive interpolating spline. We do not extend it to the sphere or to the hyperbolic plane in this paper. The fourth and final method is the learned predictor itself, namely the shared geometry-conditioned network described in Section~\ref{sec:method}. The shared-versus-separate comparison in Section~\ref{sec:shared-vs-separate} adds independently trained per-geometry predictors.

\begin{figure}[t]
\centering
\includegraphics[width=0.98\textwidth]{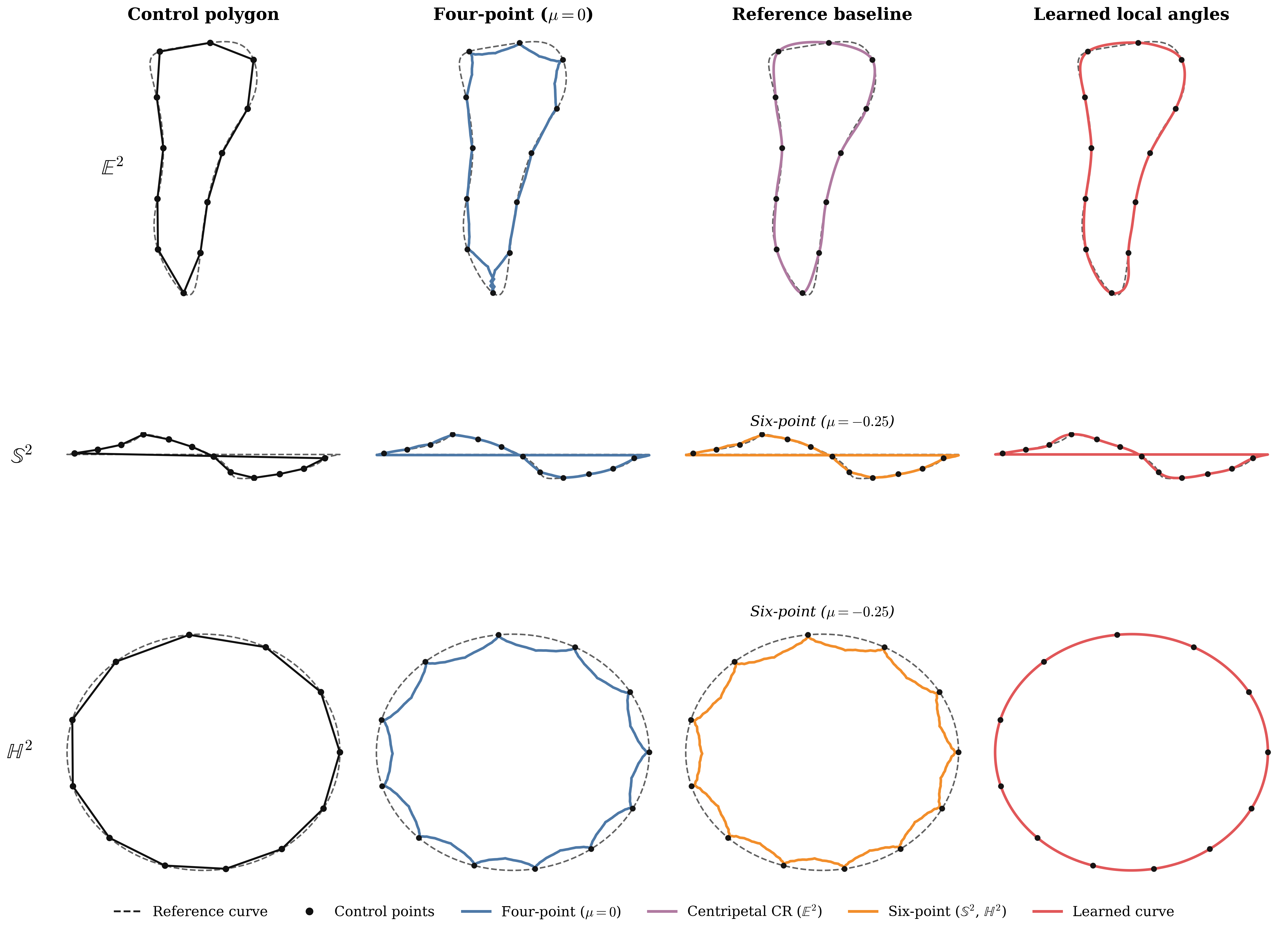}
\caption{Qualitative comparison on representative validation curves. The three rows correspond to $\Eucl$, $\Sph$ and $\Hyp$. The columns show, from left to right, the control polygon, the four-point output at $\mu=0$, a strong-baseline reference (centripetal Catmull--Rom on $\Eucl$ and six-point at $\mu=-0.25$ on $\Sph$ and $\Hyp$), and the learned local-angle output. Black dots are the control points, which lie on all rendered curves. The dashed line is the dense reference. Fixed-tension schemes scallop at the control vertices, while the learned scheme produces smooth interpolating curves.}
\label{fig:qual}
\end{figure}

\subsection{Qualitative comparison}

Figure~\ref{fig:qual} shows representative validation curves on each geometry, namely a high-aspect-ratio Euclidean shape, a perturbed great circle on the sphere, and a hyperbolic ring near the Poincar\'e disk boundary. Across the three geometries, the visible failure mode of the fixed-tension schemes is a regular scalloping at the control points, in which the refined polygon oscillates noticeably between consecutive vertices. The learned predictor passes through the same control points smoothly, with no comparable scalloping. The control points themselves lie on every curve in the figure, providing direct visual confirmation of the structural interpolation property established in Section~\ref{sec:protocol}.

\begin{figure}[t]
\centering
\includegraphics[width=0.98\textwidth]{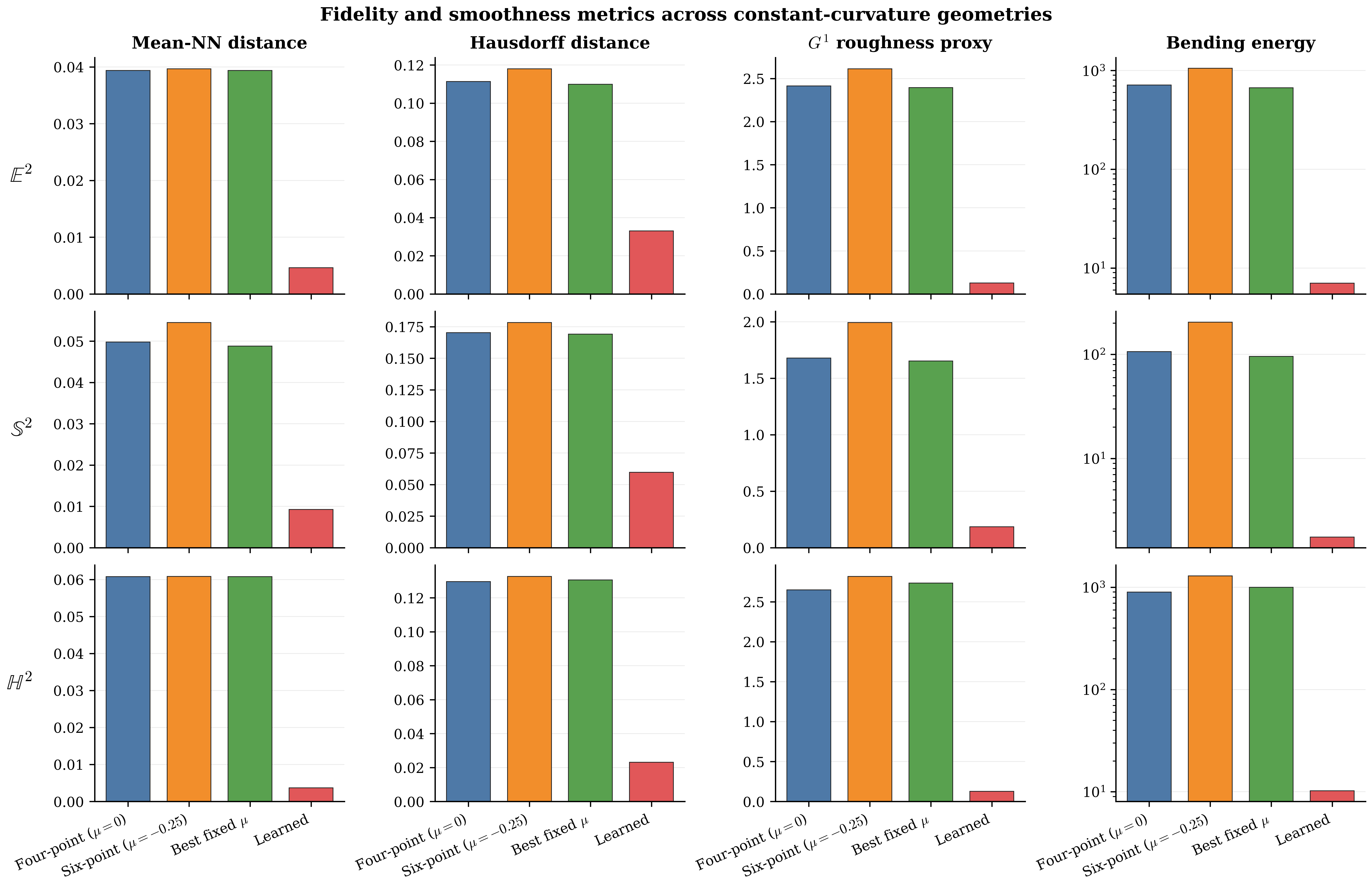}
\caption{Fidelity and smoothness metrics across constant-curvature geometries. Rows correspond to $\Eucl$, $\Sph$ and $\Hyp$. Columns correspond to mean nearest-neighbour distance, Hausdorff distance, the $G^1$ roughness proxy, and bending energy. Bars show the cross-seed mean per method, and bending is shown on a logarithmic scale. The learned predictor improves all four metrics over the validation-tuned fixed-$\mu$ baseline on every geometry.}
\label{fig:metrics}
\end{figure}

\begin{table}[t]
\centering
\caption{Main quantitative results across the three constant-curvature geometries, reported as a cross-seed summary. Each cell reports the per-curve metric averaged over all validation curves of that geometry, then summarised as mean and standard deviation across three independent training seeds. The released reproduction script also writes the corresponding per-seed and aggregate results files. The best values per metric per geometry are in bold. Note, the bending energy on $\Sph$ and $\Hyp$ is a within-geometry diagnostic, and absolute values are not directly comparable across geometries.}
\label{tab:main}
\footnotesize
\setlength{\tabcolsep}{4pt}
\begin{tabular}{llcccc}
\toprule
& Method & Mean-NN $\downarrow$ & Hausdorff $\downarrow$ & $G^1$ proxy $\downarrow$ & Bending $\downarrow$ \\
\midrule
\multirow{5}{*}{$\Eucl$}
 & Four-point ($\mu=0$)            & $0.0366 \pm 0.0019$ & $0.124 \pm 0.012$ & $2.43 \pm 0.17$ & $973 \pm 207$ \\
 & Six-point ($\mu=-0.25$)         & $0.0372 \pm 0.0017$ & $0.129 \pm 0.010$ & $2.57 \pm 0.16$ & $1508 \pm 301$ \\
 & Best fixed $\mu$                & $0.0365 \pm 0.0020$ & $0.121 \pm 0.013$ & $2.34 \pm 0.16$ & $811 \pm 150$ \\
 & Centripetal Catmull--Rom        & $0.0109 \pm 0.0007$ & $0.065 \pm 0.020$ & $0.43 \pm 0.13$ & $27.3 \pm 9.2$ \\
 & \textbf{Learned}                & $\mathbf{0.0060 \pm 0.0013}$ & $\mathbf{0.051 \pm 0.016}$ & $\mathbf{0.20 \pm 0.07}$ & $\mathbf{12.4 \pm 4.8}$ \\
\midrule
\multirow{4}{*}{$\Sph$}
 & Four-point                      & $0.0567 \pm 0.0142$ & $0.200 \pm 0.050$ & $1.78 \pm 0.25$ & $121 \pm 22$ \\
 & Six-point                       & $0.0617 \pm 0.0153$ & $0.207 \pm 0.050$ & $2.00 \pm 0.21$ & $215 \pm 16$ \\
 & Best fixed $\mu$                & $0.0537 \pm 0.0138$ & $0.194 \pm 0.047$ & $1.71 \pm 0.31$ & $91 \pm 16$ \\
 & \textbf{Learned}                & $\mathbf{0.0115 \pm 0.0017}$ & $\mathbf{0.065 \pm 0.005}$ & $\mathbf{0.18 \pm 0.02}$ & $\mathbf{1.85 \pm 0.27}$ \\
\midrule
\multirow{4}{*}{$\Hyp$}
 & Four-point                      & $0.0583 \pm 0.0032$ & $0.127 \pm 0.004$ & $2.63 \pm 0.04$ & $969 \pm 99$ \\
 & Six-point                       & $0.0581 \pm 0.0032$ & $0.129 \pm 0.004$ & $2.82 \pm 0.06$ & $1509 \pm 131$ \\
 & Best fixed $\mu$                & $0.0579 \pm 0.0033$ & $0.130 \pm 0.003$ & $2.84 \pm 0.13$ & $1681 \pm 492$ \\
 & \textbf{Learned}                & $\mathbf{0.0034 \pm 0.0009}$ & $\mathbf{0.027 \pm 0.009}$ & $\mathbf{0.11 \pm 0.03}$ & $\mathbf{8.5 \pm 2.8}$ \\
\bottomrule
\end{tabular}
\end{table}

\subsection{Main quantitative results}

Table~\ref{tab:main} reports mean nearest-neighbour distance, Hausdorff distance, the $G^1$ roughness proxy and bending energy, each averaged over the validation set and reported as cross-seed mean and standard deviation. The best values per metric per geometry are shown in bold. Figure~\ref{fig:metrics} visualises the same data on common axes per metric. The learned predictor improves mean-NN error by a factor of $6.1$ on the Euclidean plane, $4.7$ on the sphere, and $17.0$ on the hyperbolic plane, in each case relative to the best validation-tuned fixed-$\mu$ baseline. On the Euclidean plane, it also improves mean-NN error by a factor of $1.8$ relative to centripetal Catmull--Rom. Across all three geometries, bending energy and $G^1$ roughness drop by one to three orders of magnitude relative to fixed-tension baselines.

\begin{figure}[t]
\centering
\includegraphics[width=0.95\columnwidth]{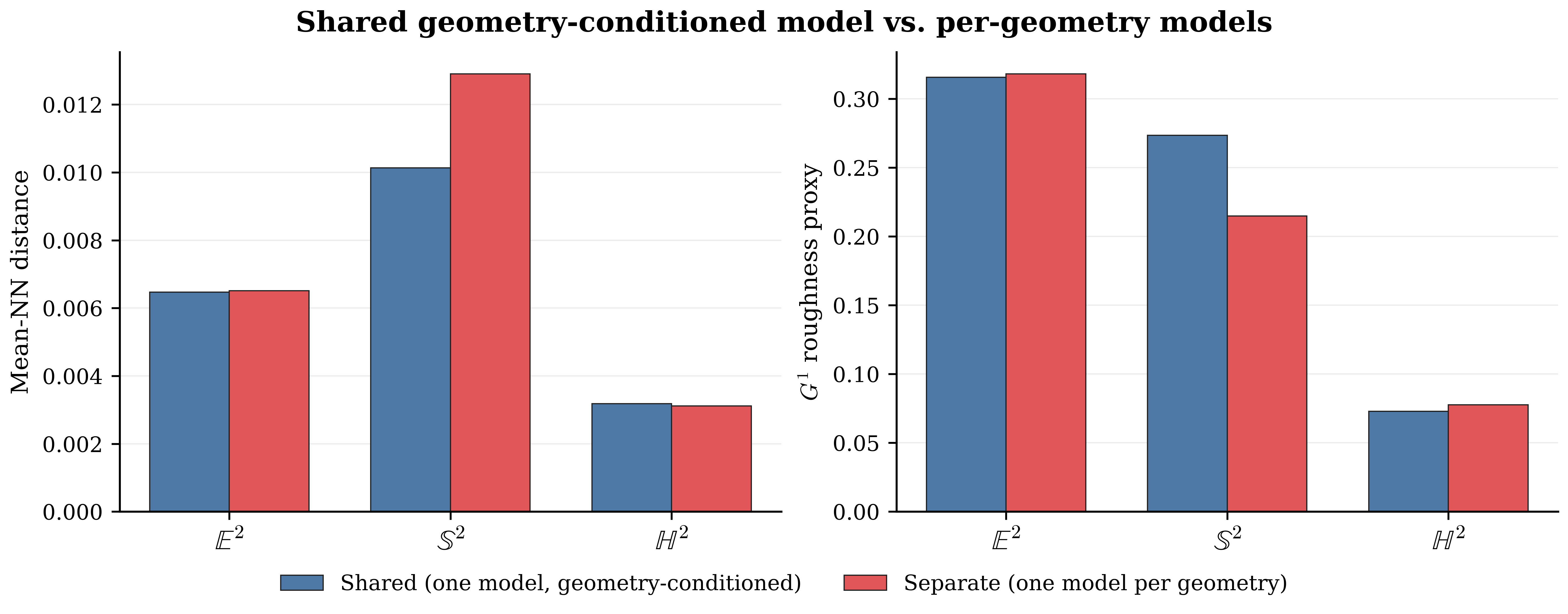}
\caption{Shared geometry-conditioned model versus three separate per-geometry models on mean nearest-neighbour distance and on the $G^1$ roughness proxy. The shared model is competitive on mean-NN across all geometries, while on the $G^1$ proxy the separate models do slightly better on $\Sph$.}
\label{fig:shared}
\end{figure}

\begin{table}[t]
\centering
\caption{Shared versus separate networks, single seed. Shared denotes one geometry-conditioned predictor with $28{,}737$ parameters. Separate denotes three independently trained predictors of the same architecture, one per geometry, each with $28{,}737$ parameters, so that the separate condition requires three deployed sets of weights with $86{,}211$ parameters in total.}
\label{tab:shared}
\small
\begin{tabular}{llcccc}
\toprule
Condition & Geometry & Mean-NN & Hausdorff & $G^1$ & Bending \\
\midrule
\multirow{3}{*}{Shared}   & $\Eucl$ & $0.0065$ & $0.0701$ & $0.325$ & $20.63$ \\
                          & $\Sph$  & $0.0099$ & $0.0709$ & $0.274$ & $\phantom{0}1.85$ \\
                          & $\Hyp$  & $0.0031$ & $0.0218$ & $0.072$ & $\phantom{0}3.85$ \\
\midrule
\multirow{3}{*}{Separate} & $\Eucl$ & $0.0065$ & $0.0703$ & $0.290$ & $19.18$ \\
                          & $\Sph$  & $0.0128$ & $0.0887$ & $0.221$ & $\phantom{0}1.59$ \\
                          & $\Hyp$  & $0.0032$ & $0.0204$ & $0.086$ & $\phantom{0}4.07$ \\
\bottomrule
\end{tabular}
\end{table}

\subsection{Shared versus separate models}
\label{sec:shared-vs-separate}

A natural question is whether a single geometry-conditioned network is genuinely useful, or whether three independently trained per-geometry predictors would do better. Figure~\ref{fig:shared} and Table~\ref{tab:shared} compare a single shared geometry-conditioned predictor against three independently trained per-geometry predictors of the same architecture. Each individual predictor has $28{,}737$ parameters, so the separate condition requires three deployed sets of weights with $86{,}211$ parameters in total, while the shared condition uses a single set of $28{,}737$ parameters covering all three geometries. The shared model is competitive on mean-NN across all geometries and is noticeably better on the sphere. The separate models obtain slightly lower $G^1$ roughness on the Euclidean plane and on the sphere, while the two conditions are essentially equal on the hyperbolic plane. We therefore do not claim uniform superiority for shared training. The result supports the shared predictor as a compact cross-geometry implementation that recovers most of the per-geometry performance at one third of the deployed parameter budget.

\subsection{Out-of-distribution case study on the ISS ground track}

To probe behaviour outside the synthetic training distribution, we evaluate on a real spherical curve, namely the ground track of the International Space Station over one orbital period. The track was sampled at $N=16$ control points and refined to $N\cdot 2^5 = 512$ output points using $k=5$ subdivision levels. The training set contained no curves drawn from this family. 

\begin{figure}[t]
\centering
\includegraphics[width=0.98\textwidth]{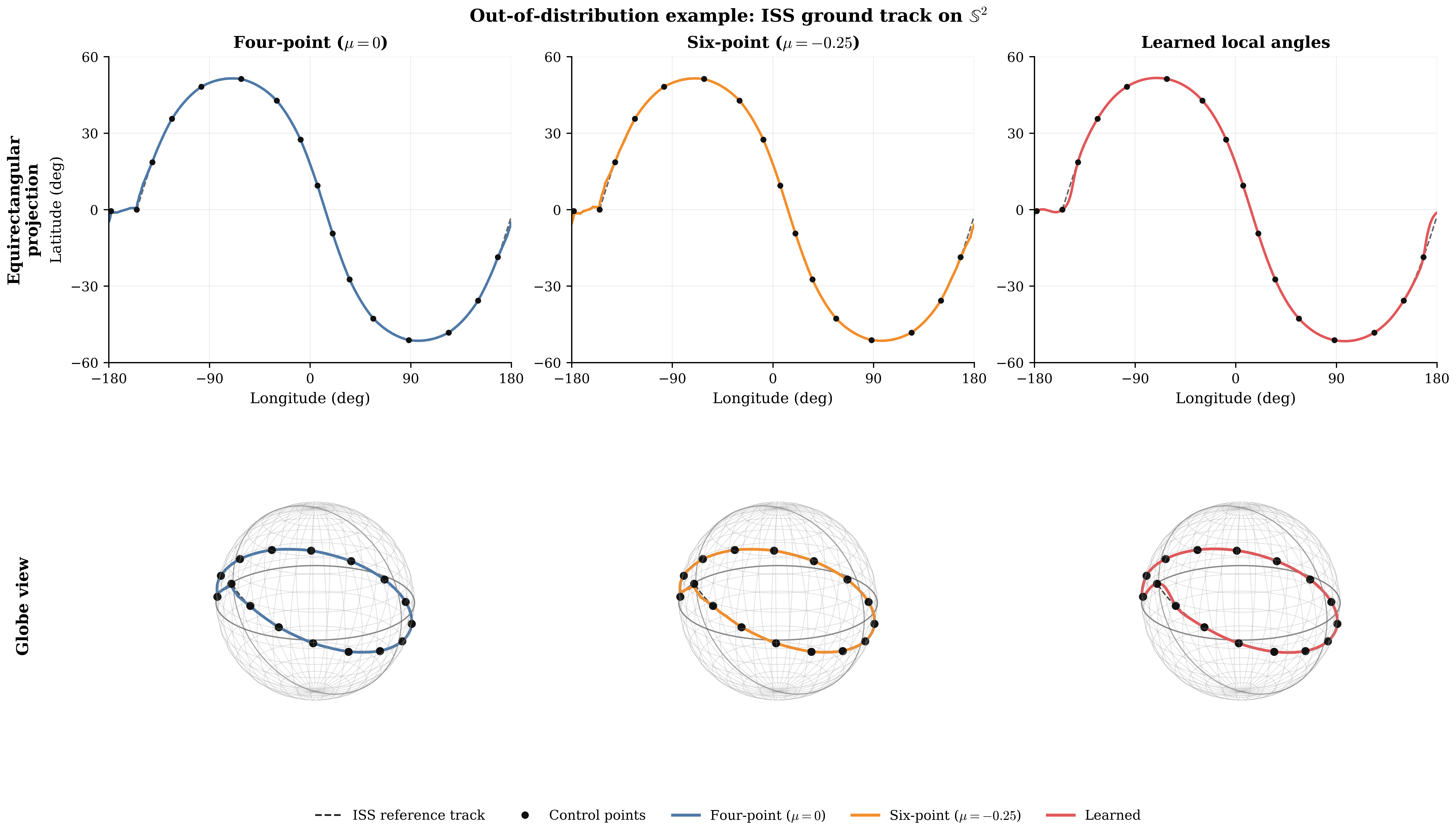}
\caption{Out-of-distribution example on the ISS orbital ground track. The top row shows an equirectangular projection. The bottom row shows the same outputs on a globe view. The learned predictor follows the reference track with comparable fidelity to the fixed-tension baselines while producing a substantially smoother refined curve.}
\label{fig:iss}
\end{figure}

\begin{table}[t]
\centering
\caption{Out-of-distribution metrics on the ISS ground track. Results are for a single curve and are not averaged across seeds.}
\label{tab:iss}
\small
\begin{tabular}{lcccc}
\toprule
Method & Mean-NN & Hausdorff & $G^1$ & Bending \\
\midrule
Four-point ($\mu = 0$)        & $0.0118$ & $0.202$ & $2.24$ & $114.4$ \\
Six-point ($\mu = -0.25$)     & $0.0129$ & $0.204$ & $2.91$ & $317.4$ \\
\textbf{Learned}              & $0.0135$ & $0.200$ & $\mathbf{0.08}$ & $\mathbf{1.73}$ \\
\bottomrule
\end{tabular}
\end{table}

Figure~\ref{fig:iss} shows the equirectangular and globe-view comparisons, and Table~\ref{tab:iss} reports the corresponding metrics. Fidelity, measured by mean-NN and Hausdorff distances, is comparable across all three methods on this real out-of-distribution trajectory. The learned predictor reduces bending energy by approximately a factor of $66$ relative to four-point and $184$ relative to six-point, with roughly $28$-fold and $36$-fold reductions in the $G^1$ roughness proxy, respectively. We interpret this result as a smoothness diagnostic on out-of-distribution input rather than as evidence of universal superiority.

A final remark here concerns the interpretation of bending energy across geometries. Bending energy is computed as a discrete proxy for $\int \kappa^2\, ds$, and its absolute value depends on the model space, the local arc-length scale, and the chosen sampling. We therefore report it as a within-geometry diagnostic for comparing methods rather than as a quantity to be compared across $\Eucl$, $\Sph$ and $\Hyp$. The matched-density resampling protocol introduced in Section~\ref{sec:protocol} is what makes the within-geometry comparison meaningful. Cross-geometry comparison of bending values is not appropriate and is not implied by any claim made in this paper.

%%=============================================================================
\section{Discussion and Limitations}
\label{sec:discussion}

The learned predictor improves the fidelity and smoothness trade-off of interpolatory subdivision relative to fixed global-tension baselines across the three constant-curvature geometries considered, while preserving structural interpolation by construction. On the Euclidean plane, where centripetal Catmull--Rom parameterisation provides a strong adaptive comparator, the learned predictor improves on Catmull--Rom in mean-NN error by a factor of $1.8$ and in bending energy by roughly a factor of $2.2$, with comparable $G^1$ roughness. The shared geometry-conditioned network remains competitive with per-geometry specialists while using one deployed set of weights rather than three, making it a practical choice for cross-geometry deployment.

We deliberately stop short of claiming that the learned predictor introduced here is a replacement for the broader class of classical interpolatory curve-design methods. The $\kappa$-curves of Yan et al.~\cite{Yan2017KCurves}, Yuksel's $C^2$ interpolating splines~\cite{Yuksel2020}, the $\varepsilon\kappa$-curves of Miura et al.~\cite{Miura2022EpsilonKappa}, the smooth interpolating curves of Binninger and Sorkine-Hornung~\cite{Binninger2022Smooth}, and the wider literature on smooth interpolating splines offer strong analytical guarantees on curvature behaviour that the learned predictor does not match. The scheme presented here is best understood as an adaptive subdivision operator within the angle-based framework. Other framings, including those with formal curvature guarantees, remain valuable and complementary, and the choice between them in any given application will depend on whether analytical guarantees or learned adaptivity is the more important consideration. We also do not claim that synthetic dense reference curves define the unique fair interpolant through a given control polygon. Those references define a consistent sparse-recovery task across geometries, which permits uniform measurement of approximation error and smoothness, but the choice of reference family influences what the predictor learns, and we avoid framing the comparison as the recovery of a uniquely correct curve.

Several limitations of the present work deserve explicit acknowledgement. The method is currently demonstrated on closed curves with a fixed number of control vertices at evaluation time, although the edge-wise predictor itself is independent of polygon length and applies to arbitrary closed control polygons of the kind used in computer-aided design. Open curves require boundary stencil treatment that we have not implemented in this paper, although such treatment is a standard extension of interpolatory subdivision. The training procedure also requires synthetic dense reference curves, and the method does not yet offer a pathway to unsupervised refinement from control polygons alone. The structural interpolation guarantee depends on the architecture rather than on the trained weights, and a different reformulation that wrote even-indexed vertices through the network would lose this property, so the design choice is load-bearing. We have verified retained-vertex interpolation to machine precision, but we do not provide formal convergence guarantees for the trained operator, since convergence of the limit curve depends on the regularity of the learned mapping. Our input features are rotation-invariant by construction, although a tighter equivariance accounting for the full trained operator using formal rotation metrics \cite{Huynh2009} remains for future work. Finally, bending energy is a discrete diagnostic and is only meaningful within a fixed sampling density. The matched-density protocol of Section~\ref{sec:protocol} enables within-geometry comparison but does not justify cross-geometry comparison of bending values.

Natural extensions of this work include open curves with explicit boundary stencils, surfaces (subdivision schemes on the faces of triangle meshes), and unsupervised or semi-supervised training procedures that do not require dense reference curves. A formal connection between the regularity of the learned predictor and the smoothness of the limit curve would strengthen the theoretical picture.

%%=============================================================================
\section{Conclusion}
\label{sec:conclusion}

In this paper, we have introduced a learned local-angle formulation for interpolatory curve subdivision in three constant-curvature geometries, namely, the Euclidean plane, the two-sphere, and the Poincar\'e disk. Rather than replacing the interpolatory subdivision rule itself, the proposed method learns the per-edge insertion angle that controls each newly inserted vertex. The original vertices are still copied exactly at every refinement level, so interpolation remains a structural property of the operator and is independent of the trained weights.

The experiments show that local angle prediction provides a consistent improvement over fixed global-tension subdivision. Under a matched-density evaluation protocol, the learned predictor improves the fidelity--smoothness trade-off relative to the validation-tuned fixed-\(\mu\) baseline across all three geometries. The Euclidean comparison with centripetal Catmull--Rom further indicates that the method remains competitive against a strong adaptive interpolatory curve model. The shared geometry-conditioned predictor is also competitive with separately trained per-geometry models, supporting the use of a single compact model together with geometry-specific geodesic primitives.

The method should be understood as an adaptive operator within the angle-based subdivision framework, not as a replacement for all classical interpolatory curve-design techniques. In particular, it does not provide the formal curvature guarantees available for some spline-based methods, and the present study is limited to closed curves with supervised training from dense reference samples. Future work should address open-curve boundary stencils, surface subdivision, weaker forms of supervision, and theoretical links between the regularity of the learned angle predictor and the smoothness of the resulting limit curve.

%%=============================================================================
\section*{Acknowledgments}
The authors acknowledge the computational resources provided by the Centre for
Visual Computing and Intelligent Systems at the University of Bradford.

\section*{Data availability}
The full codebase and data used for this study are available at: \\
\url{https://github.com/ugail/adaptive-interpolatory-subdivision} \\
(Zenodo doi: \url{https://doi.org/10.5281/zenodo.20426602}) \\
The repository contains the Python notebooks and scripts used for training,
evaluation, shared-versus-separate model comparison, and invariant testing. It
also contains the synthetic dataset generators, trained model checkpoints,
configuration files, and generated result files used in this paper. The
repository is designed to reproduce all reported numerical results, figures,
and tables from a clean run.

\section*{Funding}
No funding was received for this work.

\section*{Competing interests}
The authors declare no competing interests.

\section*{Author contributions}
Both authors contributed equally to this work.

%=======================================================================

\end{document}